\newcommand{\alg}[2]{\begin{algorithm}[ht] \caption{#1} \label{alg:#2}
  \begin{algorithmic}[1]}
\newcommand{\ealg}{\end{algorithmic} \end{algorithm}}
\renewcommand{\b}[1]{\left[#1\right]}
\newcommand{\omt}[1]{}
\newcommand{\xhdr}[1]{\vspace{-0.2in} \paragraph*{\bf {#1}.}}
\def\eps{\varepsilon}
\def\R{\mathbb{R}}
\def\tran{^{\top}}
\def\s{|\sigma|}
\def\ones{{\bf e}}
\def\squarebox#1{\hbox to #1{\hfill\vbox to #1{\vfill}}}
\newcommand{\qed}{\hspace*{\fill}\vbox{\hrule\hbox{\vrule\squarebox{.667em}\vrule}\hrule}\smallskip}
\newenvironment{proof}{\noindent{\bf Proof:~~}}{\(\qed\)}
\newtheorem{theorem}{Theorem}[section]
\newtheorem{corollary}[theorem]{Corollary}
\newtheorem{lemma}[theorem]{Lemma}
\newcommand{\p}[1]{\left(#1\right)}
\begin{document}
\title{Inherent Trade-Offs in the Fair Determination of Risk Scores}

\date{}
\author{Jon Kleinberg
\thanks{Cornell University}
\and
Sendhil Mullainathan
\thanks{Harvard University}
\and
Manish Raghavan
\thanks{Cornell University}
}

\maketitle

\begin{abstract}
Recent discussion in the public sphere about algorithmic classification
has involved tension between competing notions of what it means
for a probabilistic classification to be fair to different groups.  
We formalize three fairness conditions that lie at the heart of these
debates, and we prove that except in highly constrained special cases,
there is no method that can satisfy these three conditions simultaneously.
Moreover, even satisfying all three conditions approximately requires 
that the data lie in an approximate version of one of the constrained 
special cases identified by our theorem.
These results suggest some of the ways in which key notions of
fairness are incompatible with each other, and hence provide a
framework for thinking about the trade-offs between them.
\end{abstract}


\section{Introduction}

There are many settings in which a sequence of people comes before
a decision-maker, who must make a judgment about each based on 
some observable set of features.
Across a range of applications, these judgments are being carried out
by an increasingly wide spectrum of approaches ranging from human
expertise to algorithmic and statistical frameworks, as well as various
combinations of these approaches.

Along with these developments, 
a growing line of work has asked 
how we should reason about issues of bias and discrimination 
in settings where these algorithmic and statistical techniques, trained
on large datasets of past instances, play a significant role in the outcome.
Let us consider three examples where such issues arise, both to illustrate
the range of relevant contexts, and to surface
some of the challenges.

\xhdr{A set of example domains}
First, at various points in the criminal justice system, including
decisions about bail, sentencing, or parole, an officer of the court
may use quantitative {\em risk tools} to assess a defendant's probability
of recidivism --- future arrest --- based on their past history 
and other attributes.
Several recent analyses have asked whether such tools are mitigating or
exacerbating the sources of bias in the criminal justice system;
in one widely-publicized report, Angwin et al.
analyzed a commonly used statistical method for assigning risk
scores in the criminal justice system --- the COMPAS risk tool ---
and argued that it was biased against African-American defendants 
\cite{angwin-propublica-risk-scores,larson-angwin-propublica-analysis}.
One of their main contentions was that the tool's errors were asymmetric:
African-American defendants were more likely to be incorrectly labeled as
higher-risk than they actually were, while white defendants were more 
likely to be incorrectly labeled as lower-risk than they actually were.
Subsequent analyses raised methodological objections to this report,
and also observed that despite the COMPAS risk 
tool's errors, its estimates of the probability of recidivism are equally
well calibrated to the true outcomes for both African-American
and white defendants
\cite{propublica-analysis-doc,dieterich-northpointe-fairness,flores-re-propublica-fair,gong-algorithm-ethics-pt1}.

Second, in a very different domain, researchers have begun to analyze
the ways in which different genders and racial groups experience 
advertising and commercial content on the Internet differently
\cite{datta-google-ad-targeting,sweeney-google-ad-discrimination}.
We could ask, for example: if a male user and female user are 
equally interested in a particular product, does it follow that they're
equally likely to be shown an ad for it?
Sometimes this concern may have broader implications, for example if
women in aggregate are shown ads for lower-paying jobs.
Other times, it may represent a clash with a user's leisure interests:
if a female user interacting with an advertising platform
is interested in an activity that tends to have a male-dominated viewership,
like professional football,
is the platform as likely to show her an ad
for football as it is to show such an ad to an interested male user?

A third domain, again quite different from the previous two, is 
medical testing and diagnosis.
Doctors making decisions about a patient's treatment may rely on
tests providing probability estimates for different diseases and conditions.
Here too we can ask whether such decision-making is being applied
uniformly across different groups of patients
\cite{garb-medical-bias,williams-medical-discrimination}, and
in particular how medical tests may play a differential role for
conditions that vary widely in frequency between these groups.

\xhdr{Providing guarantees for decision procedures}
One can raise analogous questions in many other domains of
fundamental importance, including 
decisions about hiring, lending, or school admissions
\cite{whitehouse-big-data}, but we will focus on the three
examples above for the purposes of this discussion.
In these three example domains, a few structural
commonalities stand out. 
First, the algorithmic estimates
are often being used as ``input'' to a larger framework that makes
the overall decision --- a risk score provided to a
human expert in the legal and medical instances, and the output
of a machine-learning algorithm provided to a larger advertising platform
in the case of Internet ads.
Second, the underlying task is generally about classifying whether
people possess some relevant property:
recidivism, a medical condition, or interest in a product.
We will refer to people as being {\em positive instances} if
they truly possess the property, and {\em negative instances} if they do not.
Finally, the algorithmic estimates being provided for these questions
are generally not pure yes-no decisions,
but instead probability estimates about whether people constitute
positive or negative instances.

Let us suppose that we are concerned about how our decision procedure
might operate differentially between two groups of interest (such as
African-American and white defendants, or male and female users of
an advertising system).
What sorts of guarantees should we ask for 
as protection against potential bias?

A first basic goal in this literature is that the probability estimates 
provided by the algorithm should be {\em well-calibrated}:
if the algorithm identifies a set of people as having a probability $z$ of
constituting positive instances,
then approximately a $z$ fraction of this set
should indeed be positive instances
\cite{crowson-calibration-risk-scores,foster-asymptotic-calibration}.
Moreover, this condition should hold when applied separately in each
group as well \cite{flores-re-propublica-fair}. 
For example, if we are thinking in terms of potential differences
between outcomes for men and women, this means requiring that a $z$ fraction
of men and a $z$ fraction of women assigned a probability $z$ should possess
the property in question.

A second goal focuses on the people who constitute positive instances
(even if the algorithm can only imperfectly recognize them):
the average score received by people constituting positive instances
should be the same in each group.
We could think of this as {\em balance for the positive class},
since a violation of it would mean that people constituting positive
instances in one group receive consistently lower probability estimates
than people constituting positive instances in another group.
In our initial criminal justice example, for instance, one of the concerns
raised was that white defendants who went on to commit future crimes were
assigned risk scores corresponding to lower probability estimates
in aggregate; this is a violation of the condition here.
There is a completely analogous property with respect to negative instances,
which we could call {\em balance for the negative class}. These balance
conditions can be viewed as generalizations of the notions that both groups
should have equal false negative and false positive rates.

It is important to note that balance for the positive and negative classes,
as defined here, is distinct in crucial ways from the requirement 
that the average probability estimate globally 
over {\em all} members of the two groups be equal.  
This latter global requirement is a version of {\em statistical parity}
\cite{certifying-disparate-impact,three-naive-bayes-approaches,
classifying-without-discrimination,fairness-aware-regularization}.
In some cases statistical parity is a central goal (and in some it
is legally mandated), but the examples considered so far suggest that
classification and risk assessment
are much broader activities where statistical parity is often
neither feasible nor desirable.
Balance for the positive and negative classes, however, is a goal
that can be discussed independently of statistical parity,
since these two balance conditions
simply ask that once we condition on the ``correct'' answer for a person,
the chance of making a mistake on them should not depend on which group
they belong to.

\xhdr{The present work: Trade-offs among the guarantees}
Despite their different formulations, the calibration condition and 
the balance conditions for the positive and negative classes intuitively
all seem to be asking for variants of the same general goal --- that our
probability estimates should have the same effectiveness regardless of
group membership.  
One might therefore
hope that it would be feasible to achieve all of them
simultaneously.  

Our main result, however, is that these conditions are in general
incompatible with each other; they can only be simultaneously satisfied
in certain highly constrained cases.
Moreover, this incompatibility applies to {\em approximate} versions
of the conditions as well.

In the remainder of this section
we formulate this main result precisely, as a theorem 
building on a model that makes the discussion thus far more concrete. 

\subsection{Formulating the Goal}

Let's start with some basic definitions.
As above, we have a collection of people
each of whom constitutes either a
positive instance or a negative instance of the classification problem.
We'll say that the {\em positive class} consists of the people
who constitute positive instances, and the negative class consists of
the people who constitute negative instances.
For example, for criminal defendants, the positive class could consist
of those defendants who will be arrested again
within some fixed time window,
and the negative class could consist of those who will not.
The positive and negative classes thus represent the ``correct''
answer to the classification problem; our decision procedure does not
know them, but is trying to estimate them.

\xhdr{Feature vectors}
Each person has an associated {\em feature vector} $\sigma$, representing
the data that we know about them.  Let $p_\sigma$ denote the fraction
of people with feature vector $\sigma$ who belong to the positive class.
Conceptually, we will picture that while there is variation within
the set of people who have feature vector $\sigma$, this variation
is invisible to whatever decision procedure we apply;
all people with feature vector $\sigma$ are indistinguishable to the procedure.
Our model will assume that
the value $p_\sigma$ for each $\sigma$ is known to the 
procedure.\footnote{Clearly the case in which the value of $p_\sigma$ is unknown
is an important version of the problem as well; however, since our main results
establish strong limitations on what is achievable, these limitations
are only stronger because they apply even to the case of known $p_\sigma$.}

\xhdr{Groups}
Each person also belongs to one of two {\em groups}, labeled $1$ or $2$,
and we would like our decisions
to be unbiased with respect to the members of these two 
groups.\footnote{We focus on the case of two groups for simplicity
of exposition, but it is straightforward to extend all of our definitions
to the case of more than two groups.}
In our examples, the two groups could correspond to different races
or genders, or other cases where we want to look for the possibility of bias
between them.
The two groups have different distributions over feature vectors:
a person of group $t$ has a probability $a_{t\sigma}$ of exhibiting
the feature vector $\sigma$.
However, people of each group have the same probability $p_\sigma$
of belonging to the positive class provided their feature vector is $\sigma$.
In this respect, $\sigma$ contains all the relevant information available
to us about the person's future behavior; once we know $\sigma$, we do not
get any additional information from knowing their group as 
well.\footnote{As we will discuss in more detail below, 
the assumption that the group provides no additional information beyond
$\sigma$ does not restrict the generality of the model, since 
we can always consider instances in which people of different groups
never have the same feature vector $\sigma$, and hence $\sigma$
implicitly conveys perfect information about a person's group.}

\xhdr{Risk Assignments}
We say that an {\em instance} of our problem is specified by the
parameters above: a feature vector and a group for each person, with
a value $p_\sigma$ for each feature vector, and distributions
$\{a_{t\sigma}\}$ giving the frequency of the feature vectors in each group.

Informally, risk assessments are ways of dividing people up into sets
based on their
feature vectors $\sigma$ (potentially using randomization),
and then assigning each set a probability
estimate that the people in this set belong to the positive class.
Thus, we define a {\em risk assignment} to consist of a set of ``bins''
(the sets), where each bin is labeled with a {\em score} $v_b$ that
we intend to use as the probability for everyone assigned to bin $b$.
We then create a rule for assigning
people to bins based on their feature vector $\sigma$;
we allow the rule to divide people with a fixed feature vector $\sigma$
across multiple bins (reflecting the possible use of randomization).
Thus, the rule is specified by values $X_{\sigma b}$:
a fraction $X_{\sigma b}$ of all people with 
feature vector $\sigma$ are assigned to bin $b$.
Note that the rule does not have access to the group $t$ of the person
being considered, only their feature vector $\sigma$.  
(As we will see, this does not mean that the rule
is incapable of exhibiting bias between the two groups.)
In summary, a risk assignment is specified by a set of bins,
a score for each bin, and values $X_{\sigma b}$ that define a mapping from
people with feature vectors to bins.

\xhdr{Fairness Properties for Risk Assignments}
Within the model, we now express the three conditions
discussed at the outset, each reflecting
a potentially different notion of what it means for the
risk assignment to be ``fair.''

\begin{itemize}
\item[(A)] {\em Calibration within groups} requires that for
each group $t$, and each bin $b$ with associated score $v_b$,
the expected number of people from group $t$ in $b$ who belong
to the positive class should be a $v_b$ fraction of the expected number
of people from group $t$ assigned to $b$.
\item[(B)] {\em Balance for the negative class} requires that
the average score assigned to people of group 1 who belong to the
negative class should be the same as
the average score assigned to people of group 2 who belong to the
negative class.
In other words, the assignment of scores shouldn't be systematically
more inaccurate for
negative instances in one group than the other.
\item[(C)] {\em Balance for the positive class} symmetrically requires that
the average score assigned to people of group 1 who belong to the
positive class should be the same as
the average score assigned to people of group 2 who belong to the
positive class.
\end{itemize}

\xhdr{Why Do These Conditions Correspond to Notions of Fairness?}
All of these are natural conditions to impose on a risk assignment;
and as indicated by the discussion above, all of them have been proposed
as versions of fairness.
The first one essentially
asks that the scores mean what they claim to mean, 
even when considered separately in each group.
In particular, suppose a set of scores lack the first property 
for some bin $b$, and these scores
are given to a decision-maker; then if people of two different
groups both belong to bin $b$, 
the decision-maker has a clear incentive
to treat them differently, since the lack of calibration within groups
on bin $b$ means that these people have different 
aggregate probabilities of belonging to the positive class. 
Another way of stating the property of 
calibration within groups is to say that, 
conditioned on the bin to which an
individual is assigned, the likelihood that the individual is a member of the
positive class is independent of the group to which the individual belongs.
This means we are justified in treating people with the same score
comparably with respect to the outcome, rather than treating people
with the same score differently based on the group they belong to.

The second and third ask that if two individuals in different groups
exhibit comparable future behavior (negative or positive),
they should be treated comparably by the procedure.
In other words, a violation of, say, the second condition would
correspond to the members of the negative class in one group receiving
consistently higher scores than the members of the negative class
in the other group, despite the fact that the members of the negative
class in the higher-scoring group have done nothing to warrant
these higher scores.

We can also interpret some of the prior work around our earlier examples
through the lens of these conditions.
For example, in the analysis of the COMPAS risk tool for criminal
defendants, the critique by Angwin et al. focused on the risk tool's
violation of conditions (B) and (C); the counter-arguments 
established that it satisfies condition (A).
While it is clearly crucial for a risk tool to satisfy (A), 
it may still be important to know that it violates (B) and (C).
Similarly, to think in terms of the example of Internet advertising,
with male and female users as the two groups, 
condition (A) as before requires that our estimates of ad-click probability
mean the same thing in aggregate for men and women.
Conditions (B) and (C) are distinct; condition (C), for example, says
that a female user who genuinely wants to see a given ad should be
assigned the same probability as a male user who wants to see the ad.

\subsection{Determining What is Achievable: A Characterization Theorem}

When can conditions (A), (B), and (C) be simultaneously achieved?
We begin with two simple cases where it's possible.

\begin{itemize}
\item {\em Perfect prediction.} Suppose that for each feature vector
$\sigma$, we have either $p_\sigma = 0$ or $p_\sigma = 1$.
This means that we can achieve perfect prediction, since we know
each person's class label (positive or negative) for certain.
In this case, we can assign all feature vectors $\sigma$ with $p_\sigma = 0$
to a bin $b$ with score $v_b = 0$, and all $\sigma$ with $p_\sigma = 1$
to a bin $b'$ with score $v_{b'} = 1$.
It is easy to check that all three of the conditions (A), (B), and (C)
are satisfied by this risk assignment.
\item {\em Equal base rates.} Suppose, alternately, that
the two groups have the same fraction of members in the positive class;
that is, the average value of $p_\sigma$
is the same for the members of group 1 and group 2.
(We can refer to this as the {\em base rate} of the group
with respect to the classification problem.)
In this case, we can create a single bin $b$ with score equal
to this average value of $p_\sigma$, and we can assign everyone to bin $b$.
While this is not a particularly informative risk assignment,
it is again
easy to check that it satisfies fairness conditions (A), (B), and (C).
\end{itemize}

Our first main result establishes that these are in fact the only two cases
in which a risk assignment can achieve all three fairness 
guarantees simultaneously.  

\begin{theorem}
Consider an instance of the problem in which there is a 
risk assignment 
satisfying fairness conditions (A), (B), and (C).
Then the instance must either allow for perfect prediction
(with $p_\sigma$ equal to $0$ or $1$ for all $\sigma$) or 
have equal base rates.
\label{thm:exact}
\end{theorem}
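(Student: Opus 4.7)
The plan is to encode the three conditions as algebraic relations between per-bin aggregates and show that the resulting system admits only the two stated cases. For each group $t$ and bin $b$, let $y_{tb} = \sum_\sigma a_{t\sigma} X_{\sigma b}$ be the mass of group $t$ routed into bin $b$, and let $\mu_t = \sum_\sigma a_{t\sigma} p_\sigma$ be the base rate of group $t$. Calibration (A) says that the expected number of positives (respectively negatives) of group $t$ in bin $b$ equals $v_b\, y_{tb}$ (respectively $(1-v_b)\, y_{tb}$); summing over $b$ yields the two identities $\sum_b y_{tb} = 1$ and $\sum_b v_b\, y_{tb} = \mu_t$. As a byproduct, $v_b$ is a convex combination of the values $p_\sigma \in [0,1]$, so $v_b \in [0,1]$.

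Next, introduce $S_t = \sum_b v_b^2\, y_{tb}$. Computing the average score received by the positives, respectively negatives, of group $t$, condition (C) reduces to $S_1/\mu_1 = S_2/\mu_2$, and condition (B) reduces to $(\mu_1 - S_1)/(1-\mu_1) = (\mu_2 - S_2)/(1-\mu_2)$. Writing $k$ for the common value in (C), so that $S_t = k\mu_t$, and substituting into the equation from (B), a short manipulation gives $(1-k)\bigl[\mu_1(1-\mu_2) - \mu_2(1-\mu_1)\bigr] = 0$. Hence either $\mu_1 = \mu_2$, which is the equal-base-rates case, or $k = 1$.

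In the latter case, $S_t = \mu_t$ rewrites as $\sum_b v_b(1-v_b)\, y_{tb} = 0$. Every summand is non-negative since $v_b \in [0,1]$ and $y_{tb} \geq 0$, so any bin actually used must satisfy $v_b \in \{0,1\}$. Plugging this back into (A) forces $p_\sigma = v_b \in \{0,1\}$ for every feature vector $\sigma$ that $X$ routes into such a bin, i.e., perfect prediction. The main technical obstacle I anticipate is purely bookkeeping: handling degenerate instances where $\mu_t \in \{0,1\}$ (so that one of the balance conditions has a vanishing denominator and becomes vacuous), or feature vectors that appear in only one group; in each such case one checks directly that the same dichotomy holds, either because one of the two conclusions is forced trivially or because the reduction above still applies after restricting to the non-degenerate quantities.
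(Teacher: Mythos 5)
Your proposal is correct and is essentially the paper's own proof rewritten in scalar per-bin notation: your identities $\sum_b v_b\, y_{tb} = \mu_t$ and $S_t = \sum_b v_b^2\, y_{tb}$ are exactly the paper's $n_t^{\top} X V \ones$ and $n_t^{\top} X V v$ (normalized by $N_t$), your $k$ is the paper's $\gamma$, and the dichotomy $(1-k)\bigl[\mu_1(1-\mu_2) - \mu_2(1-\mu_1)\bigr] = 0$ is the same two-line-intersection argument. The only cosmetic difference is the endgame for $k=1$: your identity $\sum_b v_b(1-v_b)\, y_{tb} = 0$ compactly packages the paper's step-by-step deduction that every used bin has score $0$ or $1$, after which calibration forces $p_\sigma \in \{0,1\}$ just as in the paper.
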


Thus, in every instance that is more complex than the two cases
noted above, there will be some natural fairness condition that is
violated by any risk assignment.
Moreover, note that this result applies regardless of how the
risk assignment is computed; since our framework considers risk assignments
to be arbitrary functions from feature vectors to bins
labeled with probability estimates, 
it applies independently of the method --- algorithmic or otherwise ---
that is used to construct the risk assignment.

The conclusions of the first theorem 
can be relaxed in a continuous fashion
when the fairness conditions are only approximate.
In particular, for any $\eps > 0$ 
we can define $\eps$-approximate versions of
each of conditions (A), (B), and (C) (specified precisely in the next
section), each of which requires that the corresponding equalities
between groups hold only to within an error of $\eps$.
For any $\delta > 0$, we can also define a 
$\delta$-approximate version of the equal base rates condition
(requiring that the base rates of the two groups 
be within an additive $\delta$ of each other)
and a $\delta$-approximate 
version of the perfect prediction condition (requiring that 
in each group, the average of the expected scores assigned
to members of the positive class is at least $1 - \delta$;
by the calibration condition, this can be shown to imply a 
complementary bound on the average of the expected scores
assigned to members of the negative class).

In these terms, our approximate version of Theorem~\ref{thm:exact}
is the following.

\begin{theorem}
There is a continuous function $f$, with $f(x)$ going to $0$ as 
$x$ goes to $0$, so that the following holds.
For all $\eps > 0$, and 
any instance of the problem with a risk assignment
satisfying the $\eps$-approximate versions
of fairness conditions (A), (B), and (C),
the instance must satisfy either the $f(\eps)$-approximate version of
perfect prediction or the $f(\eps)$-approximate version of
equal base rates.
\label{thm:approx}
\end{theorem}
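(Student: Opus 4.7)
The plan is to carry out the algebraic proof of Theorem~\ref{thm:exact} with error tracking, so that the approximate versions of the hypotheses yield approximate versions of the conclusions.

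First, I would reduce each instance (together with its risk assignment) to four aggregate statistics per group $t \in \{1,2\}$: the expected score $x_t := \E{S \mid G=t}$, the second moment $q_t := \E{S^2 \mid G=t}$, the base rate $\pi_t := \E{Y \mid G=t}$, and $\mu_t := \E{SY \mid G=t}$. Exact calibration (A) forces $\pi_t = x_t$ and $\mu_t = q_t$; under the $\eps$-approximate version of (A), these equalities hold to within $O(\eps)$. Balance for the positive class (C) becomes $\mu_1/\pi_1 = \mu_2/\pi_2$, and balance for the negative class (B) becomes $(x_1-\mu_1)/(1-\pi_1) = (x_2-\mu_2)/(1-\pi_2)$, each with an additive $O(\eps)$ slack in the approximate setting.

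Next, I would mimic the short manipulation underlying Theorem~\ref{thm:exact}: cross-multiplying the two balance equations and using the first to simplify the second yields the single relation $(x_1 - x_2)(x_2 - q_2) = 0$, and symmetrically $(x_1 - x_2)(x_1 - q_1) = 0$. Under the $\eps$-approximate hypotheses the same computation produces $|(x_1-x_2)(x_t-q_t)| = O(\eps)$ for $t=1,2$. By an elementary dichotomy, either $|x_1 - x_2| \le \sqrt{\eps}$ --- which, combined with $\pi_t = x_t + O(\eps)$, gives $O(\sqrt{\eps})$-approximate equal base rates --- or else $x_t - q_t = \E{S(1-S) \mid G=t} = O(\sqrt{\eps})$ for both groups. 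The latter forces each group's score distribution to concentrate within $O(\eps^{1/4})$ of $\{0,1\}$, and unpacks (via $\E{S \mid Y=1, G=t} = \mu_t/\pi_t$) into the required approximate perfect-prediction bound $\E{S \mid Y=1, G=t} \ge 1 - f(\eps)$ with $f(\eps) \to 0$.

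The main obstacle I anticipate is the treatment of degenerate regimes in which the denominators $\pi_t$, $1-\pi_t$, or a group's total mass tend to $0$: the multiplicative constants hidden in the $O(\cdot)$ notation blow up, and the conditional expectations in (B), (C) cease to be well-defined. These edge cases would need separate short arguments; for instance, if $\pi_1 = o(1)$, then approximate calibration and balance for positive should force $\pi_2 = o(1)$ as well, yielding approximate equal base rates directly. A cleaner but nonconstructive alternative sidesteps the explicit $\sqrt{\eps}$ bookkeeping: the space of pairs of probability measures on $[0,1] \times \{0,1\}$ is weakly compact, conditions (A)--(C) are closed under weak convergence away from vanishing denominators, and Theorem~\ref{thm:exact} excludes any weak limit of $\eps_n$-approximate counterexamples with $\eps_n \to 0$.
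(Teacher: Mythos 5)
Your main line is, at its core, the paper's own proof in different notation: the paper likewise pushes the exact argument through with explicit error tracking, splits on whether the base rates are within $\sqrt{\eps}$ of each other, and obtains $f(\eps)=\Theta(\sqrt{\eps})$; your product relation $(x_1-x_2)(x_t-q_t)\approx 0$ is the moment-language form of its two-lines computation. (One simplification you are missing: the paper's $f(\eps)$-approximate perfect prediction is \emph{defined} as $\gamma_t = \mu_t/\pi_t \ge 1-f(\eps)$, so your detour through $\mathbb{E}[S(1-S)\mid G=t]$, Markov, and $\eps^{1/4}$-concentration of the score distribution near $\{0,1\}$ is unnecessary and strictly lossier than writing $\gamma_t \ge (1-O(\eps))\,q_t/x_t$ directly.) However, there is a genuine gap exactly where you flagged one, and both of your proposed repairs fail.

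With additive $O(\eps)$ slacks, your unequal-base-rate branch gives only $x_t-q_t=O(\sqrt{\eps})$, hence $\gamma_t\ge 1-O(\sqrt{\eps})/\pi_t$, which is vacuous when $\pi_t=O(\sqrt{\eps})$ --- and this regime cannot be excised, because the theorem must hold there. Your patch for it is false: small $\pi_1$ does \emph{not} force $\pi_2$ small. Exact perfect prediction with $\pi_1=0.01$ and $\pi_2=1/2$ satisfies (A), (B), (C) exactly, so in the small-$\pi_1$ regime the required conclusion is approximate perfect prediction, i.e.\ you must prove $\gamma_1\ge 1-f(\eps)$ precisely when $\pi_1$ is tiny, not approximately equal base rates. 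The compactness fallback breaks at the same point: a sequence of counterexamples with $\pi_1^{(n)}\to 0$ has a weak limit in which condition (C) has a vanishing denominator and Theorem~\ref{thm:exact} cannot be invoked; ``closed under weak convergence away from vanishing denominators'' excludes exactly the problematic sequences. What rescues the argument --- and what the paper actually does --- is to keep every slack multiplicative: its conditions (A$'$)--(C$'$) are $(1\pm\eps)$-relative, so each error term inherits a factor of the small quantity it perturbs ($\eps\mu_1$, $\eps\rho_1$, etc.); after cross-multiplying, the coefficient of $\gamma_1$ comes out to exactly $\sqrt{\eps}$, independent of $\rho_1$, and the residual error carries a factor $\rho_1(1-\rho_1)\le 1/4$, yielding the uniform bound $\gamma_1\ge 1-2\eps-\tfrac{3}{4}\sqrt{\eps}$ and $f(\eps)=\sqrt{\eps}\max(1,\,3\sqrt{\eps}+3/4)$. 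In your notation the same care yields $(x_1-x_2)(x_1-q_1)=O(\eps\, x_1)$ rather than $O(\eps)$, so that when $|x_1-x_2|\ge\sqrt{\eps}$ the factor $x_1$ cancels, giving $(x_1-q_1)/x_1=O(\sqrt{\eps})$ and $\gamma_1\ge 1-O(\sqrt{\eps})$ for all base rates. Your skeleton survives, but only after replacing the additive bookkeeping with relative bookkeeping; neither of your stated fallbacks supplies this.
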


Thus, anything that approximately satisfies the fairness constraints
must approximately look like one of the two simple cases identified above.

Finally, in connection to Theorem \ref{thm:exact}, we note that
when the two groups have equal base rates, then one can ask for
the most accurate risk assignment that satisfies all three fairness
conditions (A), (B), and (C) simultaneously.
Since the risk assignment that gives the same score to everyone satisfies
the three conditions, we know that at least one such risk assignment exists;
hence, it is natural to seek to optimize over the set of all such assignments.
We consider this algorithmic question in the final technical section of
the paper.

To reflect a bit further on our main theorems and what they suggest,
we note that our intention in the present work
isn't to make a recommendation on how conflicts
between different definitions of fairness should be handled.
Nor is our intention to analyze which definitions of
fairness are violated in particular applications or datasets.
Rather, our point is to establish certain unavoidable trade-offs
between the definitions, regardless of the specific context
and regardless of the method used to compute risk scores.
Since each of the definitions reflect (and have been proposed as) natural
notions of what it should mean for a risk score to be fair,
these trade-offs suggest a striking implication:
that outside of narrowly delineated cases,
any assignment of risk scores can in principle be subject to
natural criticisms on the grounds of bias.
This is equally true whether the risk score is determined by an algorithm
or by a system of human decision-makers.

\xhdr{Special Cases of the Model}
Our main results, which place strong restrictions on
when the three fairness conditions can be simultaneously satisfied,
have more power when the underlying model of the input
is more general, since it means that the restrictions implied by the
theorems apply in greater generality.
However, it is also useful to note certain special cases of our model,
obtained by limiting the flexibility of certain parameters in 
intuitive ways.
The point is that our results apply {\em a fortiori} 
to these more limited special cases.

First, we have already observed one natural special case of our model:
cases in which, for each feature vector $\sigma$, only members
of one group (but not the other) can exhibit $\sigma$.
This means that $\sigma$ contains perfect information about group membership,
and so it corresponds to instances in which risk assignments would have
the potential to use knowledge of an individual's group membership.
Note that we can convert any instance of our problem into a new
instance that belongs to this special case as follows.
For each feature vector $\sigma$, we create two new feature
vectors $\sigma^{(1)}$ and $\sigma^{(2)}$; then, for each member of group $1$
who had feature vector $\sigma$, we assign them $\sigma^{(1)}$, and for
each member of group $2$
who had feature vector $\sigma$, we assign them $\sigma^{(2)}$.
The resulting instance has the property that each feature vector is
associated with members of only one group, but it preserves the
essential aspects of the original instance in other respects.

Second, 
we allow risk assignments in our model to split
people with a given feature vector $\sigma$ over several bins.
Our results also therefore apply to the natural special case of the model
with {\em integral} risk assignments,
in which all people with a given feature $\sigma$ must go to the same bin.

Third, our model is a generalization of binary classification, which only allows
for 2 bins. Note that although binary classification does not explicitly assign
scores, we can consider the probability that an individual belongs to the
positive class given that they were assigned to a specific bin to be the score
for that bin. Thus, our results hold in the traditional binary classification
setting as well.

\xhdr{Data-Generating Processes}
Finally, there is the question of where the data in an instance of
our problem comes from.  
Our results do not assume any particular process for generating
the positive/negative class labels, 
feature vectors, and group memberships; we simply assume that
we are given such a collection of values (regardless of where they
came from), and then our results address the existence or non-existence
of certain risk assignments for these values.

This increases the generality of our results, since it means that they 
apply to any process that produces data of the form described by our model.
To give an example of a natural generative model that would produce
instances with the structure that we need, 
one could assume
that each individual starts with a ``hidden'' class label 
(positive or negative), and a feature
vector $\sigma$ is then probabilistically generated for this individual
from a distribution that can depend on their class label 
and their group membership.
(If feature vectors produced for the two groups are disjoint from
one another, then the requirement that the value of $p_\sigma$
is independent of group membership given $\sigma$ necessarily holds.)
Since a process with this structure produces instances
from our model, 
our results apply to data that arises from such a generative process.

It is also interesting to note that the basic set-up of our model,
with the population divided across a set of feature vectors 
for which race provides no additional information, is in fact a very close match
to the information one gets from the output of a well-calibrated risk tool.
In this sense, one setting for our model would be the problem of
applying post-processing to the output of such a risk tool to 
ensure additional fairness guarantees.
Indeed, since much of the recent controversy about fair risk scores
has involved risk tools that are well-calibrated but lack the
other fairness conditions we consider, such an interpretation of
the model could be a useful way to think about how one might
work with these tools in the context of a broader system.

\subsection{Further Related Work}

Mounting concern over discrimination in machine learning has led to a
large body of new work seeking to better understand and prevent it.
Barocas and Selbst survey a range of ways in which data-analysis
algorithms can lead to discriminatory outcomes
\cite{big-data-disparate-impact}, and review articles by 
Romei and Ruggieri \cite{romei-discrimination-survey}
and Zliobaite \cite{zliobaite-survey} survey 
data-analytic and algorithmic methods for measuring discrimination.

Kamiran and Calders \cite{classifying-without-discrimination}
and Hajian and Domingo-Ferrer \cite{methodology-discrimination-prevention}
seek to modify datasets to remove any information that might permit
discrimination.
Similarly, Zemel et al. look to learn fair intermediate
representations of data while preserving information needed for
classification \cite{learning-fair-representations}.
Joseph et al. consider how fairness issues can arise during the process of
learning, modeling this using a multi-armed bandit framework
\cite{bandit-fairness}.

One common notion of fairness
is ``statistical parity'' -- equal fractions of each group should be
treated as belonging to the positive class
\cite{three-naive-bayes-approaches,
classifying-without-discrimination,fairness-aware-regularization}.
Recent papers have also considered approximate relaxations of 
statistical parity, motivated by the formulation of {\em disparate impact}
in the U.S. legal code 
\cite{certifying-disparate-impact,
bilal-zafar-learning-fair-classif}.
Work in these directions
has developed learning algorithms that penalize violations
of statistical parity
\cite{three-naive-bayes-approaches,fairness-aware-regularization}. 
As noted above, we consider definitions other than statistical parity
that take into account the class membership (positive or negative)
of the people being classified.

Dwork et al.~propose a framework based on a task-specific externally defined
similarity metric between individuals, seeking to achieve fairness through the
goal that ``similar people [be] treated similarly''
\cite{dwork-fairness-awareness}. 
They strive towards individual fairness,
which is a stronger notion of fairness than the 
definitions we use;
however, our approach shares some of the underlying motivation
(though not the specifics) in that our balance conditions for
the positive and negative classes also reflect the notion that similar
people should be treated similarly.

Much of the applied work on risk scores, as noted above, focuses
on calibration as a central goal 
\cite{crowson-calibration-risk-scores,dieterich-northpointe-fairness,flores-re-propublica-fair}.
In particular, responding to the criticism of their risk scores as
displaying asymmetric errors for different groups,
Dietrich et al.~note that empirically, both in their domain and in similar
settings, it is typically difficult to achieve symmetry in the error rates
across groups when base rates differ significantly.
Our formulation of the balance conditions for the positive and negative
classes, and our result showing the incompatibility of these conditions
with calibration, provides a theoretical basis for such observations.

In recent work concurrent with ours, Hardt et al.~consider the natural
analogues of our conditions (B) and (C), balance for the negative and
positive classes, in the case of classifiers that output binary ``yes/no''
predictions rather than real-valued scores as in our case
\cite{hardt-fairness}.
Since they do not require an analogue of calibration, it is possible
to satisfy the two balance constraints simultaneously, and they provide
methods for optimizing performance measures of the prediction rule
subject to satisfying these two constraints.
Also concurrent with our work and that of Hardt et al.,
Chouldechova \cite{chouldechova-fair-prediction} and 
Corbett-Davies et al. \cite{washington-post}
(and see also \cite{corbett-davies-working-paper})
consider binary
prediction subject to these same analogues of the balance conditions
for the negative and positive classes, together with a form of
calibration adapted to binary prediction (requiring that 
for all people given a positive label, the same fraction
of people in each group should truly belong to the positive class).
Among other results, they show that no classification rule satisfying
the required constraints is possible.
Finally, a recent paper of Friedler et al.~\cite{suresh-impossibility}
defines two axiomatic properties of feature generation and 
shows that no mechanism can be fair under these two properties.

\section{The Characterization Theorems}

Starting with the notation and definitions from the previous section,
we now give a proof of Theorem \ref{thm:exact}.

\xhdr{Informal overview}
Let us begin with a brief overview of the proof, 
before going into a more detailed version of it.  
For this discussion, 
let $N_t$ denote the number of people in group $t$, and 
$\mu_t$ be the number of people in group $t$ who belong to the positive class.

Roughly speaking, the proof proceeds in two steps.
First, consider a single bin $b$.
By the calibration condition, the expected total score given to 
the group-$t$ people in bin $b$ is equal to the expected number
of group-$t$ people in bin $b$ who belong to the positive class.
Summing over all bins, we find that the total score given to all
people in group $t$ (that is, the sum of the scores received by
everyone in group $t$) is equal to the total number of people
in the positive class in group $t$, which is $\mu_t$.

Now, let $x$ be the average score given to a member of the negative class, 
and let $y$ be the average score given to a member of the positive class.
By the balance conditions for the negative and positive classes,
these values of $x$ and $y$ are the same for both groups.

Given the values of $x$ and $y$, the total number of people in the
positive class $\mu_t$, and the total score given out to people
in group $t$ --- which, as argued above, is also $\mu_t$ ---
we can write the total score as 
$$(N - \mu_t) x + \mu_t y = \mu_t.$$
This defines a line for each group $t$ in the two variables $x$ and $y$, 
and hence we 
obtain a system of two linear equations (one for each group) in the 
unknowns $x$ and $y$.

If all three conditions --- calibration, and balance for the
two classes --- are to be satisfied, then we must be at a set
of parameters that represents a solution to the system of two equations.
If the base rates are equal, then $\mu_1 = \mu_2$ and hence the
two lines are the same; in this case, 
the system of equations is satisfied by any choice of $x$ and $y$.
If the base rates are not equal, then the two lines are distinct, and
they intersect only at the point $(x,y) = (0,1)$, which implies
perfect prediction --- an average score of $0$ for members of
the negative class and $1$ for members of the positive class.
Thus, the three conditions can be simultaneously satisfied if
and only if we have equal base rates or perfect prediction.

This concludes the overview of the proof; in the remainder of the
section we describe the argument at a more detailed level.

\xhdr{Definitions and notation}
Recall from our notation in the previous section that
an $a_{t \sigma}$ fraction of the people in group $t$ have
feature vector $\sigma$; we thus write $n_{t \sigma} = a_{t \sigma} N_t$
for the number of people in group $t$ with feature vector $\sigma$.
Many of the components of the risk assignment and its evaluation
can be written in terms of operations on 
a set of underlying matrices and vectors, which we begin by specifying.

\begin{itemize}
\item 
First, let $\s$ denote the number of feature vectors in the instance,
and let $p \in \R^{\s}$ be a vector indexed by the possible
feature vectors, with the coordinate in position $\sigma$ equal to $p_\sigma$.
For group $t$, 
let $n_t \in \R^{\s}$ also be a vector indexed by the possible
feature vectors, with the coordinate in position $\sigma$ equal 
to $n_{t \sigma}$.
Finally, it will be useful to have a representation of $p$ as a 
diagonal matrix; thus, let $P$ be a $\s \times \s$
diagonal matrix with $P_{\sigma \sigma} = p_{\sigma}$.
\item 
We now specify a risk assignment as follows.
The risk assignment involves a set of $B$ bins with associated scores;
let $v \in \R^B$ be a vector indexed by the bins,
with the coordinate in position $b$ equal to the score $v_b$ of bin $b$.
Let $V$ be a diagonal matrix version of $v$: it is a $B \times B$ matrix
with $V_{bb} = v_b$.
Finally, let $X$ be the $\s \times B$ matrix of $X_{\sigma b}$ values,
specifying the fraction of people with feature vector $\sigma$
who get mapped to bin $b$ under the assignment procedure.
\end{itemize}

There is an important point to note about the $X_{\sigma b}$ values.
If all of them are equal to $0$ or $1$, this corresponds to a procedure
in which all people with the same feature vector $\sigma$ get assigned
to the same bin.
When some of the $X_{\sigma b}$ values are not equal to $0$ or $1$, the
people with vector $\sigma$ are being divided among multiple bins.
In this case, there is an implicit randomization taking place
with respect to the positive and negative classes, and with respect
to the two groups, which we can think of as follows.
Since the procedure cannot distinguish among people with vector $\sigma$,
in the case that it distributes these people across multiple bins, the
subset of people with vector $\sigma$ who belong to the positive and negative
classes, and to the two groups,
are divided up randomly across these bins in proportions corresponding
to $X_{\sigma b}$.
In particular, if there are $n_{t \sigma}$ group-$t$
people with vector $\sigma$,
the expected number of these people who belong to the positive class
and are assigned to bin $b$ is $n_{t \sigma} p_\sigma X_{\sigma b}$.

Let us now proceed with the proof of Theorem \ref{thm:exact},
starting with the assumption that
our risk assignment satisfies conditions
(A), (B), and (C).

\xhdr{Calibration within groups}
We begin by working out some useful expressions in terms of the
matrices and vectors defined above.
We observe that $n_t\tran P$ is a vector in $\R^{\s}$ whose coordinate
corresponding to feature vector $\sigma$ 
equals the number of people 
in group $t$ who have feature vector $\sigma$ and belong to the positive class.
$n_t\tran X$ is a vector in $\R^B$ whose coordinate
corresponding to bin $b$ equals the expected number of people
in group $t$ assigned to bin $b$.

By further multiplying these vectors on the right, we get additional
useful quantities.  Here are two in particular:
\begin{itemize}
\item $n_t\tran X V$ is a vector in $\R^B$ whose coordinate corresponding
to bin $b$ equals the expected sum of the scores assigned to all group-$t$
people in bin $b$.  
That is, using the subscript $b$ to denote the coordinate corresponding
to bin $b$, we can write
$(n_t\tran X V)_b = v_b (n_t\tran X)_b$
by the definition of the diagonal matrix $V$.
\item $n_t\tran P X$ is a vector in $\R^B$ whose coordinate corresponding
to bin $b$ equals the expected number of group-$t$ people in the positive 
class who are placed in bin $b$.  
\end{itemize}

Now, condition (A), that the risk assignment is calibrated within groups,
implies that the two vectors above are equal coordinate-wise, and so we have
the following equation for all $t$:
\begin{equation}
  n_t\tran PX = n_t\tran XV
  \label{eq:unbiased}
\end{equation}

Calibration condition (A) also has an implication for the 
total score received by all people in group $t$.
Suppose we multiply the two sides of \eqref{eq:unbiased} on the right
by the vector $\ones \in \R^B$ whose coordinates are all $1$, obtaining

\begin{equation}
n_t\tran PX \ones = n_t\tran XV \ones.
\label{eq:unbiased-sum}
\end{equation}

The left-hand-side is the number of group-$t$ people in the positive
class.  The right-hand-side, which we can also write as
$n_t\tran X v$, is equal to the sum of the expected scores received by all 
group-$t$ people.
These two quantities are thus the same, and we write their common value
as $\mu_t$.

\xhdr{Fairness to the positive and negative classes}
We now want to write down vector equations corresponding to 
the fairness conditions (B) and (C) for the negative and positive classes.
First, recall that for the $B$-dimensional vector
$n_t\tran P X$, the coordinate corresponding
to bin $b$ equals the expected number of group-$t$ people in the positive 
class who are placed in bin $b$.  
Thus, to compute the sum of the expected scores 
received by all group-$t$ people
in the positive class, we simply need to take the inner product with
the vector $v$, yielding $n_t\tran P X v$.  Since $\mu_t$ is the total
number of group-$t$ people in the positive class, the average 
of the expected scores received 
by a group-$t$ person in the positive class is the ratio
$\displaystyle{\frac{1}{\mu_t} n_t\tran PXv}$.
Thus, condition (C), that members of the positive class should receive
the same average score in each group, can be written
\begin{equation}
  \frac{1}{\mu_1} n_1\tran PXv = \frac{1}{\mu_2} n_2\tran PXv
  \label{eq:guilty_fair}
\end{equation}
Applying strictly analogous reasoning but to the fractions 
$1 - p_\sigma$ of people in the negative class,
we can write condition (B),
that members of the negative class should receive
the same average score in each group, as
\begin{equation}
  \frac{1}{N_1-\mu_1} n_1\tran (I-P)Xv = \frac{1}{N_2-\mu_2} n_2\tran (I-P)Xv
  \label{eq:inn_fair}
\end{equation}

Using~\eqref{eq:unbiased}, we can rewrite~\eqref{eq:guilty_fair} to get
\begin{equation}
  \frac{1}{\mu_1} n_1\tran XVv = \frac{1}{\mu_2} n_2\tran XVv
  \label{eq:guilty_fair_simp}
\end{equation}
Similarly, we can rewrite~\eqref{eq:inn_fair} as
\begin{equation}
  \frac{1}{N_1-\mu_1} (\mu_1 - n_1\tran XVv) = \frac{1}{N_2-\mu_2} (\mu_2 - n_2\tran XVv)
  \label{eq:inn_fair_simp}
\end{equation}

\xhdr{The portion of the score received by the positive class}
We think of the ratios on the two sides of 
\eqref{eq:guilty_fair}, and equivalently \eqref{eq:guilty_fair_simp},
as the average of the expected
scores received by a member of the positive class in group $t$:
the numerator is the sum of the expected scores received by the members of the
positive class, and the denominator is the size of the positive class.
Let us denote this fraction by $\gamma_t$; we note that this is
the quantity $y$ used in the informal overview of the proof at the
start of the section.
By \eqref{eq:unbiased-sum}, we can alternately think of the denominator
as the sum of the expected scores received by all group-$t$ people.
Hence, the two sides of \eqref{eq:guilty_fair} and \eqref{eq:guilty_fair_simp}
can be viewed as representing the ratio of the sum of the expected
scores in the positive class of group $t$ to the sum of the expected scores
in group $t$ as a whole.
\eqref{eq:guilty_fair} requires that $\gamma_1 = \gamma_2$; let us
denote this common value by $\gamma$. 

Now, we observe that $\gamma = 1$ corresponds to a case in which 
the sum of the expected scores in just the positive class of group $t$
is equal to the sum of the expected scores in all of group $t$.
In this case, it must be that all members of the negative class are
assigned to bins of score $0$.
If any members of the positive class were assigned to a bin of score $0$,
this would violate the calibration condition (A); hence
all members of the positive class are assigned to bins of positive score.
Moreover, these bins of positive score contain no members of the negative
class (since they've all been assigned to bins of score $0$), and so
again by the calibration condition (A), the members of the positive class
are all assigned to bins of score $1$.
Finally, applying the calibration condition once more, it follows that
the members of the negative class all have feature vectors $\sigma$
with $p_\sigma = 0$ and 
the members of the positive class all have feature vectors $\sigma$
with $p_\sigma = 1$.
Hence, when $\gamma = 1$ we have perfect prediction.

Finally, we use our definition of $\gamma_t$ as 
$\displaystyle{\frac{1}{\mu_t} n_t\tran XVv}$, and the fact that
$\gamma_1 = \gamma_2 = \gamma$ to write \eqref{eq:inn_fair_simp} as 
\begin{align*}
  \frac{1}{N_1-\mu_1} (\mu_1 - \gamma \mu_1) &= \frac{1}{N_2-\mu_2} (\mu_2 - \gamma \mu_2) \\
  \frac{1}{N_1-\mu_1} \mu_1 (1 - \gamma) &= \frac{1}{N_2-\mu_2} \mu_2 (1 - \gamma) \\
  \frac{\mu_1/N_1}{1-\mu_1/N_1} (1 - \gamma) &= \frac{\mu_2/N_2}{1-\mu_2/N_2} (1 - \gamma)
\end{align*}
Now, this last equality implies that one of two things must be the case.
Either $1 - \gamma = 0$, in which case $\gamma = 1$ and we have 
perfect prediction; 
or 
$$\displaystyle{\frac{\mu_1/N_1}{1-\mu_1/N_1}}
= \displaystyle{\frac{\mu_2/N_2}{1-\mu_2/N_2}},$$ in which case 
$\mu_1/N_1 = \mu_2/N_2$
and we have equal base rates.
This completes the proof of Theorem \ref{thm:exact}.

\xhdr{Some Comments on the Connection to Statistical Parity}
Earlier we noted that conditions (B) and (C) --- the balance conditions
for the positive and negative classes --- are quite different from the
requirement of {\em statistical parity}, which asserts that the average
of the scores over {\em all} members of each group 
be the same.

When the two groups have equal base rates, then the risk assignment that
gives the same score to everyone in the population achieves statistical
parity along with conditions (A), (B), and (C).
But when the two groups do not have equal base rates, 
it is immediate to show that statistical parity is
inconsistent with both the calibration condition (A) and
with the conjunction of the two balance conditions (B) and (C).
To see the inconsistency of statistical parity with the
calibration condition, we take Equation 
\eqref{eq:unbiased} from the proof above,
sum the coordinates of the vectors on both sides, and
divide by $N_t$, the number of people in group $t$.
Statistical parity requires that the right-hand sides of the resulting
equation be the same for $t = 1, 2$, while the assumption that
the two groups have unequal base rates implies that the left-hand sides
of the equation must be different for $t = 1, 2$.
To see the inconsistency of statistical parity with the
two balance conditions (B) and (C), we simply observe that if
the average score assigned to the positive class and to the negative
class are the same in the two groups, then the average of the scores
over all members of the two groups cannot be the same provided they
do not contain the same proportion of positive-class and negative-class
members.

\section{The Approximate Theorem}


In this section we prove Theorem \ref{thm:approx}.
First, we must first give a precise specification
of the approximate fairness conditions:
\begin{align*}
  (1-\varepsilon) [n_t\tran XV]_b &\le [n_t\tran PX]_b \le (1-\varepsilon)
  [n_t\tran XV]_b \tag{A'} \label{eq:approx_su} \\
  (1-\varepsilon) \p{\frac{1}{N_2-\mu_2}} n_t\tran (I-P) Xv &\le
  \p{\frac{1}{N_1-\mu_1}} n_t\tran (I-P)Xv \le (1+\varepsilon)
  \p{\frac{1}{N_2-\mu_2}} n_t\tran (I-P) Xv \tag{B'} \label{eq:approx_in} \\
  (1-\varepsilon) \p{\frac{1}{\mu_2}} n_t\tran P Xv &\le
  \p{\frac{1}{\mu_1}} n_t\tran PXv \le (1+\varepsilon)
  \p{\frac{1}{\mu_2}} n_t\tran P Xv \tag{C'} \label{eq:approx_gu}
\end{align*}
For \eqref{eq:approx_in} and \eqref{eq:approx_gu}, we also require that these
hold when $\mu_1$ and $\mu_2$ are interchanged.

We also specify the approximate versions of perfect prediction and equal base
rates in terms of $f(\varepsilon)$, which is a function that goes to 0 as
$\varepsilon$ goes to 0.
\begin{itemize}
  \item {\em Approximate perfect prediction.} $\gamma_1 \ge 1 - f(\varepsilon)$
    and $\gamma_2 \ge 1 - f(\varepsilon)$
  \item {\em Approximately equal base rates.} $|\mu_1/N_1 - \mu_2/N_2| \le
    f(\varepsilon)$
\end{itemize}

A brief overview of the proof of Theorem \ref{thm:approx} is
as follows.  It proceeds 
by first establishing an approximate form of Equation 
\eqref{eq:unbiased} above, which implies that the total 
expected score assigned
in each group is approximately equal to the total size of the positive class.
This in turn makes it possible to formulate approximate forms of 
Equations \eqref{eq:guilty_fair} and \eqref{eq:inn_fair}.
When the base rates are close together, the approximation is too loose to derive
bounds on the predictive power; but this is okay since in this case we have
approximately equal base rates. Otherwise, when the base rates differ
significantly, we show that most of the expected score must be assigned to the
positive class, giving us approximately perfect prediction.

The remainder of this section provides the full details of the proof.

\xhdr{Total scores and the number of people in the positive class}
First, we will show that
the total score for each group is approximately $\mu_t$, the number of people in
the positive class. Define $\hat \mu_t =
n_t\tran Xv$. Using
\eqref{eq:approx_su}, we have
\begin{align*}
  \hat \mu_t &= n_t\tran Xv \\
  &= n_t\tran XVe \\
  &= \sum_{b=1}^B [n_t\tran PX]_b \\
  &\le (1+\varepsilon) \sum_{b=1}^B [n_t\tran PX]_b \\
  &= (1+\varepsilon) n_t\tran PXe \\
  &= (1+\varepsilon) \mu_t
\end{align*}
Similarly, we can lower bound $\hat \mu_t$ as
\begin{align*}
  \hat \mu_t &= \sum_{b=1}^B [n_t\tran PX]_b \\
  &\ge (1-\varepsilon) \sum_{b=1}^B [n_t\tran PX]_b \\
  &= (1-\varepsilon) \mu_t
\end{align*}
Combining these, we have
\begin{equation}
  (1-\varepsilon) \mu_t \le \hat \mu_t \le (1+\varepsilon) \mu_t.
  \label{eq:approx_mu}
\end{equation}

\xhdr{The portion of the score received by the positive class}
We can use \eqref{eq:approx_gu} to show that $\gamma_1 \approx \gamma_2$.
Recall that $\gamma_t$, the average of the expected scores assigned
to members of the positive class in group $t$,
is defined as $\gamma_t =
\frac{1}{\mu_t} n_tPXv$. Then, it follows trivially from \eqref{eq:approx_gu}
that
\begin{equation}
  (1-\varepsilon) \gamma_2 \le \gamma_1 \le (1+\varepsilon) \gamma_2.
  \label{eq:approx_gamma}
\end{equation}

\xhdr{The relationship between the base rates}
We can apply this to \eqref{eq:approx_in} to relate $\mu_1$ and $\mu_2$, using
the observation that the score not received by people of the positive class must fall
instead to people of the negative class. Examining the left inequality of
\eqref{eq:approx_in}, we have
\begin{align*}
  (1-\varepsilon) \p{\frac{1}{N_2-\mu_2}} n_t\tran (I-P) Xv &= 
  (1-\varepsilon) \p{\frac{1}{N_2-\mu_2}} (n_t\tran Xv - n_t\tran PXv) \\
  &= (1-\varepsilon) \p{\frac{1}{N_2-\mu_2}} (\hat \mu_2 - \gamma_2 \mu_2) \\
  &\ge (1-\varepsilon) \p{\frac{1}{N_2-\mu_2}} ((1-\varepsilon) \mu_2 - \gamma_2
  \mu_t) \\
  &= (1-\varepsilon) \p{\frac{\mu_2}{N_2-\mu_2}} (1-\varepsilon - \gamma_2) \\
  &\ge (1-\varepsilon) \p{\frac{\mu_2}{N_2-\mu_2}} \p{1-\varepsilon -
  \frac{\gamma_1}{1-\varepsilon}} \\
  &= (1-2\varepsilon + \varepsilon^2 - \gamma_1)\p{\frac{\mu_2}{N_2-\mu_2}}
\end{align*}
Thus, the left inequality of \eqref{eq:approx_in} becomes
\begin{equation}
  (1-2\varepsilon + \varepsilon^2 - \gamma_1)\p{\frac{\mu_2}{N_2-\mu_2}} \le
  \p{\frac{1}{N_1-\mu_1}} n_t\tran (I-P)Xv
\end{equation}
By definition, $\hat \mu_1 = n_t\tran Xv$ and $\gamma_t \mu_t = n_t\tran PXv$,
so this becomes
\begin{equation}
  (1-2\varepsilon + \varepsilon^2 - \gamma_1)\p{\frac{\mu_2}{N_2-\mu_2}} \le
  \p{\frac{1}{N_1-\mu_1}} (\hat \mu_1 - \gamma_1 \mu_1)
  \label{eq:aprrox_mu_ratio}
\end{equation}

\xhdr{If the base rates differ}
Let $\rho_1$ and $\rho_2$ be the respective base rates, i.e. $\rho_1 =
\mu_1/N_1$ and $\rho_2 = \mu_2/N_2$. Assume that $\rho_1 \le \rho_2$ (otherwise
we can switch $\mu_1$ and $\mu_2$ in the above analysis), and assume towards
contradiction that the base rates differ by at least $\sqrt{\varepsilon}$,
meaning $\rho_1 + \sqrt{\varepsilon} < \rho_2$. Using
\eqref{eq:aprrox_mu_ratio},
\begin{align*}
  \frac{\rho_1 + \sqrt{\varepsilon}}{1-\rho_1 = \sqrt{\varepsilon}} &\le
  \frac{\rho_2}{1-\rho_2} \\
  & \le \p{\frac{1+\varepsilon - \gamma_1}{1 - 2\varepsilon + \varepsilon^2 -
  \gamma_1}}\p{\frac{\rho_1}{1-\rho_1}} \\
  (\rho_1 + \sqrt{\varepsilon})(1-\rho_1)(1 - 2\varepsilon + \varepsilon^2 -
  \gamma_1) &\le \rho_1 (1-\rho_1 - \sqrt{\varepsilon})(1+\varepsilon -
  \gamma_1) \\
  (\rho_1 + \sqrt{\varepsilon})(1-\rho_1)(1-2\varepsilon) - \rho_1(1-\rho_1 -
  \sqrt{\varepsilon})(1+\varepsilon) &\le \gamma_1 \b{(\rho_1 +
  \sqrt{\varepsilon})(1-\rho_1) - \rho_1(1-\rho_1 - \sqrt{\varepsilon})} \\
  \rho_1[(1-\rho_1)(1-2\varepsilon) - (1-\rho_1-\sqrt{\varepsilon})(1+\varepsilon)]
  + \sqrt{\varepsilon}(1-\rho_1)(1-2\varepsilon) &\le \gamma_1
  [\sqrt{\varepsilon}(1-\rho_1) + \sqrt{\varepsilon} \rho_1] \\
  \rho_1(-2\varepsilon + 2\varepsilon \rho_1 - \varepsilon +
  \varepsilon \rho_1 + \sqrt{\varepsilon} + \varepsilon\sqrt{\varepsilon}) +
  \sqrt{\varepsilon} (1 - 2\varepsilon - \rho_1 + 2\varepsilon \rho_1) &\le
  \gamma_1 \sqrt{\varepsilon} \\
  \rho_1(-3\varepsilon + 3\varepsilon \rho_1 + \sqrt{\varepsilon} + \varepsilon
  \sqrt{\varepsilon} - \sqrt{\varepsilon} + 2\varepsilon \sqrt{\varepsilon}) +
  \sqrt{\varepsilon}(1-2\varepsilon)
  &\le \gamma_1 \sqrt{\varepsilon} \\
  \varepsilon \rho_1(-3 + 3 \rho_1 + 3 \sqrt{\varepsilon}) +
  \sqrt{\varepsilon}(1-2\varepsilon) &\le \gamma_1
  \sqrt{\varepsilon} \\
  3\varepsilon \rho_1(-1 + \rho_1) + \sqrt{\varepsilon}(1-2\varepsilon) &\le
  \gamma_1 \sqrt{\varepsilon} \\
  1 - 2\varepsilon - 3\sqrt{\varepsilon}\rho_1(1-\rho_1) &\le \gamma_1 \\
  1 - \sqrt{\varepsilon}\p{2\sqrt{\varepsilon} + \frac{3}{4}} &\le \gamma_1
\end{align*}
Recall that $\gamma_2 \ge \gamma_1 (1-\varepsilon)$, so
\begin{align*}
  \gamma_2 &\ge (1-\varepsilon) \gamma_1 \\
  &\ge (1-\varepsilon)\p{1-\sqrt{\varepsilon} \p{2 \sqrt{\varepsilon} +
  \frac{3}{4}}} \\
  &\ge 1 - \varepsilon - \sqrt{\varepsilon} \p{2 \sqrt{\varepsilon} +
  \frac{3}{4}} \\
  &= 1 - \sqrt{\varepsilon} \p{3\sqrt{\varepsilon} + \frac{3}{4}}
\end{align*}
Let $f(\varepsilon) = \sqrt{\varepsilon} \max(1,
3\sqrt{\varepsilon} + 3/4)$. Note that we assumed that $\rho_1$ and $\rho_2$
differ by an additive $\sqrt{\varepsilon} \le f(\varepsilon)$. Therefore if the
$\varepsilon$-fairness conditions are met and the base rates are not within an
additive $f(\varepsilon)$, then $\gamma_1 \ge 1-f(\varepsilon)$ and $\gamma_2
\ge 1-f(\varepsilon)$. This completes the proof of Theorem \ref{thm:approx}.

\section{Reducing Loss with Equal Base Rates}

In a risk assignment, we would like as much of the score as possible
to be assigned to members of the positive class.
With this in mind, if an individual receives a score of $v$, we
define their {\em individual loss} to be $v$ if they belong
to the negative class, and $1 - v$ if they belong to the positive class.
The loss of the risk assignment in group $t$ is then the sum of the 
expected individual losses to each member of group $t$.
In terms of the matrix-vector products used in the proof 
of Theorem \ref{thm:exact}, one can show that the loss for group $t$ 
may be written as
\begin{align*}
  \ell_t(X) &= n_t\tran (I-P)Xv + (\mu_t - n_t\tran PXv) \\
  &= 2(\mu_t - n_t\tran PXv),
\end{align*}
and the total loss is just the weighted sum of the losses for each group.

Now, let us say that a 
{\em fair assignment} is one that satisfies our three conditions
(A), (B), and (C).  
As noted above, 
when the base rates in the two groups are equal, 
the set of fair assignments is non-empty, 
since the calibrated risk assignment that places
everyone in a single bin is fair.
We can therefore ask, in the case of equal base rates, whether
there exists a fair assignment whose loss is strictly less than 
that of the trivial one-bin assignment.
It is not hard to show that this is possible if and only if there
is any assignment using more than one bin; we will call such an
assignment a {\em non-trivial assignment}.

Note that the assignment that minimizes loss is simply the one that
assigns each $\sigma$ to a separate bin with a score of $p_{\sigma}$,
meaning $X$ is the identity matrix. While this assignment, which we
refer to as the identity assignment $I$, is well-calibrated, it may
violate fairness conditions (B) and (C).  It is not hard to show that the
loss for any other assignment is strictly greater than the loss for
$I$. As a result, unless the identity assignment happens to be fair,
every fair assignment must have larger loss than that of $I$, forcing a
tradeoff between performance and fairness.

%

\subsection{Characterization of Well-Calibrated Solutions}

To better understand the space of feasible solutions, suppose we drop the
fairness conditions (B) and (C) for now and study risk assignments that are
simply well-calibrated, satisfying (A). As in the proof of Theorem
\ref{thm:exact}, we write $\gamma_t$ for the average of the expected
scores assigned 
to members of the positive class in group $t$, and we define the
{\em fairness difference} to be $\gamma_1 - \gamma_2$. If this is nonnegative,
we say the risk assignment {\em weakly favors} group 1; if it is nonpositive, it
weakly favors group 2. Since a risk assignment is fair if and only if $\gamma_1
= \gamma_2$, it is fair if and only if the fairness difference is 0.

We wish to characterize when non-trivial fair risk assignments are possible.
First, we observe that without the fairness requirements, the set of possible
fairness differences under well-calibrated assignments is an interval.

\begin{lemma}
  If group 1 and group 2 have equal base rates, then for any two non-trivial
  well-calibrated risk assignments with fairness differences $d_1$ and $d_2$ and
  for any $d_3 \in [d_1, d_2]$, there exists a non-trivial well-calibrated risk
  assignment with fairness difference $d_3$.
  \label{lem:diff_convex}
\end{lemma}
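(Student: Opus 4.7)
The plan is to prove this by an explicit convex-combination construction on risk assignments, which turns the space of attainable fairness differences into a (path-)connected subset of $\R$, and hence (being an interval's image under a linear map) into an interval itself.

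Concretely, suppose we have two non-trivial well-calibrated assignments $\mathcal{A}_1 = (B_1, v^{(1)}, X^{(1)})$ and $\mathcal{A}_2 = (B_2, v^{(2)}, X^{(2)})$ with fairness differences $d_1$ and $d_2$. For any $\lambda \in [0,1]$ I would build a new assignment $\mathcal{A}_\lambda$ whose bin set is the disjoint union $B_1 \sqcup B_2$, keeping the original scores $v^{(1)}, v^{(2)}$ on the respective halves, and setting the mapping to $\lambda X^{(1)}_{\sigma b}$ on bins $b \in B_1$ and $(1-\lambda) X^{(2)}_{\sigma b}$ on bins $b \in B_2$. The row sums of $X$ still equal one, so this is a valid risk assignment.

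The calibration check is direct: on each bin $b \in B_i$ every quantity (expected number of people, expected number of positives) is just the corresponding quantity in $\mathcal{A}_i$ scaled by $\lambda$ or $1-\lambda$, so the ratio defining calibration (condition (A), equivalently equation \eqref{eq:unbiased} restricted to that bin) is preserved. Likewise, the quantity $n_t^\top P X v$ for $\mathcal{A}_\lambda$ decomposes as $\lambda \, n_t^\top P X^{(1)} v^{(1)} + (1-\lambda)\, n_t^\top P X^{(2)} v^{(2)}$, so $\gamma_t(\mathcal{A}_\lambda) = \lambda\, \gamma_t(\mathcal{A}_1) + (1-\lambda)\, \gamma_t(\mathcal{A}_2)$, and therefore the fairness difference of $\mathcal{A}_\lambda$ is $\lambda d_1 + (1-\lambda) d_2$. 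As $\lambda$ ranges over $[0,1]$ this sweeps out the entire closed interval between $d_1$ and $d_2$, so for any target $d_3$ I can pick $\lambda$ with $\lambda d_1 + (1-\lambda) d_2 = d_3$.

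The only thing left is non-triviality. For $d_3 = d_1$ or $d_3 = d_2$ just take $\mathcal{A}_1$ or $\mathcal{A}_2$, which are non-trivial by assumption. For $d_3$ strictly between, the corresponding $\lambda$ is in $(0,1)$, so both $B_1$ and $B_2$ receive nonzero mass under $\mathcal{A}_\lambda$; since each $B_i$ already contains at least two bins (or at any rate the combined construction uses $|B_1|+|B_2| \ge 4$ bins with nonzero mass), the combined assignment is non-trivial. I do not expect any real obstacle here: the main thing to be careful about is writing out the bin-by-bin calibration check so that the ratio of positives to total in each new bin is exactly the score assigned, and handling the endpoint $\lambda \in \{0,1\}$ cases separately so that non-triviality is not accidentally lost.
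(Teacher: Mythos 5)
Your proposal is correct and takes essentially the same route as the paper: the paper likewise forms the concatenated assignment $X^{(3)} = [\lambda X^{(1)} ~~ (1-\lambda) X^{(2)}]$ on the disjoint union of the two bin sets with the original scores retained, checks row sums, verifies calibration via $n_t\tran P X^{(3)} = n_t\tran X^{(3)} V^{(3)}$, and uses linearity of $\gamma_t$ to conclude the fairness difference is $\lambda d_1 + (1-\lambda) d_2$ with $\lambda = (d_2 - d_3)/(d_2 - d_1)$. Your explicit separate treatment of the endpoints $\lambda \in \{0,1\}$ is only a minor tidying of the paper's non-triviality observation, not a different argument.
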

\begin{proof}
The basic idea is that we can effectively take convex combinations of
well-calibrated assignments to produce any well-calibrated assignment ``in
between'' them.  We carry this out as follows.

  Let $X^{(1)}$ and $X^{(2)}$ be the allocation matrices for assignments with
  fairness differences $d_1$ and $d_2$ respectively, where $d_1 < d_2$. Choose
  $\lambda$ such that $\lambda d_1 + (1-\lambda) d_2 = d_3$, meaning $\lambda =
  (d_2-d_3)/(d_2-d_1)$. Then, $X^{(3)} = [\lambda X^{(1)} ~~~ (1-\lambda)
  X^{(2)}]$ is a nontrivial well-calibrated assignment with fairness difference
  $d_3$.

  First, we observe that $X^{(3)}$ is a valid assignment because each row sums
  to 1 (meaning everyone from every $\sigma$ gets assigned to a bin), since each
  row of $\lambda X^{(1)}$ sums to $\lambda$ and each row of $(1-\lambda)
  X^{(2)}$ sums to $(1-\lambda)$. Moreover, it is nontrivial because every
  nonempty bin created by $X^{(1)}$ and $X^{(2)}$ is a nonempty bin under
  $X^{(3)}$.

  Let $v^{(1)}$ and $v^{(2)}$ be the respective bin labels for assignments
  $X^{(1)}$ and $X^{(2)}$. Define $\displaystyle{
    v^{(3)} = \begin{bmatrix}
      v^{(1)} \\
      v^{(2)}
    \end{bmatrix}
  }$.

  Finally, let $V^{(3)} = \text{diag}(v^{(3)})$. Define $V^{(1)}$ and
  $V^{(2)}$ analogously.
  Note that
  $\displaystyle{
    V^{(3)} = \begin{bmatrix}
      V^{(1)} & 0 \\
      0 & V^{(2)}
    \end{bmatrix}
  }$.

  We observe that $X^{(3)}$ is calibrated because
  \begin{align*}
    n_t\tran PX^{(3)} &= n_t\tran P [\lambda X^{(1)} ~~~ (1-\lambda)
    X^{(2)}] \\
    &= [\lambda n_t\tran PX^{(1)} ~~~ (1-\lambda) n_t\tran PX^{(2)}] \\
    &= [\lambda n_t\tran X^{(1)} V^{(1)} ~~~ (1-\lambda) n_t\tran
    X^{(2)}V^{(2)}] \\
    &= n_t\tran [\lambda X^{(1)} ~~~ (1-\lambda) X^{(2)}] V^{(3)} \\
    &= n_t\tran X^{(3)}V^{(3)}
  \end{align*}

  Finally, we show that the fairness difference is $d_3$. Let
  $\gamma_1^{(1)}$ and $\gamma_2^{(1)}$ be the portions of the total 
  expected score
  received by the positive class from each group respectively.
  Define
  $\gamma_1^{(2)}, \gamma_2^{(2)}, \gamma_1^{(3)}, \gamma_2^{(3)}$ similarly.
  \begin{align*}
    \gamma_1^{(3)} - \gamma_2^{(3)} &= \frac{1}{\mu} n_1\tran
    PX^{(3)}v^{(3)} -
    \frac{1}{\mu} n_2\tran PX^{(3)}v^{(3)} \\
    &= \frac{1}{\mu} (n_1\tran - n_2\tran) PX^{(3)}v^{(3)} \\
    &= \frac{1}{\mu} (n_1\tran - n_2\tran) P[\lambda X^{(1)}v^{(1)} ~~~
    (1-\lambda) X^{(2)} v^{(2)}] \\
    &= \frac{1}{\mu} (\lambda (n_1\tran - n_2\tran) PX^{(1)}v^{(1)} + 
    (1-\lambda) (n_1\tran - n_2\tran) X^{(2)} v^{(2)}]) \\
    &= \lambda(\gamma_1^{(1)} - \gamma_2^{(1)}) + (1-\lambda)
    (\gamma_1^{(2)} - \gamma_2^{(2)}) \\
    &= \lambda d_1 + (1-\lambda) d_2 \\
    &= d_3
  \end{align*}
\end{proof}

\begin{corollary}
  There exists a non-trivial fair assignment if and only if there exist
  non-trivial well-calibrated assignments $X^{(1)}$ and $X^{(2)}$ such that
  $X^{(1)}$ weakly favors group 1 and $X^{(2)}$ weakly favors group 2.
\end{corollary}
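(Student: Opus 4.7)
The plan is to derive this as an essentially immediate consequence of Lemma~\ref{lem:diff_convex}, by observing that a fair assignment is precisely a well-calibrated assignment with fairness difference equal to $0$, and that $0$ is the unique value that lies in every interval spanned by a nonnegative and a nonpositive number.

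For the forward direction, suppose there is a non-trivial fair assignment $X$. Since $X$ satisfies (A), (B), and (C), we have $\gamma_1 = \gamma_2$, so its fairness difference is $0$. Then $X$ itself weakly favors group $1$ (the difference is $\geq 0$) and also weakly favors group $2$ (the difference is $\leq 0$), so taking $X^{(1)} = X^{(2)} = X$ gives the required pair.

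For the backward direction, suppose we are given non-trivial well-calibrated assignments $X^{(1)}$ and $X^{(2)}$ with fairness differences $d_1 \geq 0$ and $d_2 \leq 0$ respectively. Then $0 \in [d_2, d_1]$, so Lemma~\ref{lem:diff_convex} (applied with $d_3 = 0$, possibly after relabeling so the two values appear in increasing order, or taking either assignment directly if $d_1 = d_2 = 0$) produces a non-trivial well-calibrated assignment whose fairness difference is $0$. This assignment satisfies (A) by construction, and satisfies (B) and (C) because $\gamma_1 = \gamma_2$ together with the equal-base-rates assumption and Equation~\eqref{eq:inn_fair_simp} from the proof of Theorem~\ref{thm:exact} forces the balance condition for the negative class as well. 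Hence it is a non-trivial fair assignment.

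There is no real obstacle here: the content of the argument lives entirely in Lemma~\ref{lem:diff_convex}, and the corollary is just the observation that fairness corresponds to the single value $0$ on the interval of achievable fairness differences, so $0$ is attainable exactly when the interval straddles it. The only small point to check is that the convex-combination construction of the lemma preserves non-triviality even when $d_1 = d_2 = 0$, which is handled by simply returning either input assignment in that degenerate case.
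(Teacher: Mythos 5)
Your proof is correct and follows essentially the same route as the paper's: the forward direction observes that a non-trivial fair assignment itself weakly favors both groups, and the backward direction applies Lemma~\ref{lem:diff_convex} with $d_3 = 0$ to the interval spanned by the two fairness differences. Your added checks---that fairness difference $0$ together with calibration and equal base rates yields both balance conditions, and that the degenerate case $d_1 = d_2 = 0$ is handled by returning either input---are sound refinements of details the paper leaves implicit (it asserts ``fair if and only if the fairness difference is $0$'' in the text preceding the lemma).
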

\begin{proof}
  If there is a non-trivial fair assignment, then it weakly favors both group 1
  and group 2, proving one direction.

  To prove the other direction, observe that the fairness differences $d_1$ and
  $d_2$ of $X^{(1)}$ and $X^{(2)}$ are nonnegative and nonpositive respectively.
  Since the set of fairness differences achievable by non-trivial
  well-calibrated assignments is an interval by Lemma \ref{lem:diff_convex}, 
  there
  exists a non-trivial well-calibrated assignment with fairness difference 0,
  meaning there exists a non-trivial fair assignment.
\end{proof}

It is an open question whether there is a polynomial-time algorithm to find
a fair assignment of minimum loss, or even to determine whether a
non-trivial fair solution exists.

\subsection{NP-Completeness of Non-Trivial Integral Fair Risk Assignments}

As discussed in the introduction, 
risk assignments in our model are allowed to split 
people with a given feature vector $\sigma$ over several bins;
however, it is also of interest to consider the special case
of {\em integral} risk assignments,
in which all people with a given feature $\sigma$ must go to the same bin.
For the case of equal base rates, we can show that determining
whether there is a non-trivial integral fair assignment is NP-complete.
The proof uses a reduction from the Subset Sum problem 
and is given in the Appendix.

The basic idea of the reduction is as follows. 
We have an instance of Subset Sum with numbers
$w_1, \ldots, w_m$ and a target number $T$; the question 
is whether there is a subset of the $w_i$'s that sums to $T$.
As before, $\gamma_t$ denotes the average of the expected scores received by
members of the positive class in group $t$.
We first ensure that there is
exactly one non-trivial way to allocate the people of group 1, allowing
us to control $\gamma_1$. The fairness conditions then require that
$\gamma_2 = \gamma_1$, which we can use to encode the target value in
the instance of Subset Sum. For every input number $w_i$
in the Subset Sum instance, we
create $p_{\sigma_{2i-1}}$ and $p_{\sigma_{2i}}$, close to each other
in value and far from all other $p_\sigma$ values, such that grouping
$\sigma_{2i-1}$ and $\sigma_{2i}$ together into a bin corresponds to
choosing $w_i$ for the subset, while not grouping them corresponds to not
taking $w_i$. This ensures that group 2 can be assigned
with the correct value of $\gamma_2$ if and only if there is a
solution to the Subset Sum instance.

\section{Conclusion}

In this work we have formalized three fundamental conditions for
risk assignments to individuals, each of which has been proposed as
a basic measure of what it means for the risk assignment to be fair.
Our main results show that except in highly constrained special cases,
it is not possible to satisfy these three constraints simultaneously;
and moreover, a version of this fact holds in an approximate sense as well.

Since these results hold regardless of the
method used to compute the risk assignment, it can be phrased in fairly
clean terms in a number of domains where the trade-offs among these
conditions do not appear to be well-understood.
To take one simple example, suppose we want to determine the risk that
a person is a carrier for a disease $X$, and suppose that a higher fraction
of women than men are carriers.
Then our results imply that
in any test designed to estimate the probability that someone is a 
carrier of $X$, at least one of the following undesirable properties must hold:
(a) the test's probability estimates are systematically skewed upward or
downward for at least one gender;
or (b) the test assigns a higher average risk estimate to healthy people 
(non-carriers) in one gender than the other; 
or (c) the test assigns a higher average risk estimate to carriers of
the disease in one gender than the other.
The point is that this trade-off among (a), (b), and (c) is not a fact
about medicine; it is simply a fact about risk estimates when the 
base rates differ between two groups.

Finally, we note that our results suggest a number of interesting directions
for further work.  First, when the base rates between the two underlying
groups are equal, our results do not resolve the computational tractability
of finding the most accurate risk assignment, subject to our three
fairness conditions, when the people with a given feature vector 
can be split across multiple bins.  (Our NP-completeness result applies
only to the case in which everyone with a given feature vector
must be assigned to the same bin.)  Second, there may be a number of
settings in which the cost (social or otherwise) of false positives
may differ greatly from the cost of false negatives.
In such cases, we could imagine searching for risk assignments 
that satisfy the calibration condition together with only one
of the two balance conditions, corresponding to the class for whom
errors are more costly.  Determining when two of our three 
conditions can be simultaneously satisfied in this way
is an interesting open question.
More broadly, determining how the trade-offs discussed here can be
incorporated into broader families of proposed fairness conditions
suggests interesting avenues for future research.

\bibliographystyle{plain}
\bibliography{refs}

\section*{Appendix: NP-Completeness of Non-Trivial Integral Fair Risk Assignments}

We can reduce to the integral assignment problem, parameterized by $a_{1\sigma},
a_{2\sigma}$, and $p_{\sigma}$, from subset sum as follows.

Suppose we have an instance of the subset sum problem specified by 
$m$ numbers $w_1, \dots, w_m$ and a target $T$; the goal is to
determine whether a subset of the $w_i$ add up to $T$.
We create an instance of
the integral assignment problem with $\sigma_1, \dots, \sigma_{2m+2}$.
$a_{1,\sigma_i} = 1/2$ if $i \in \{2m+1, 2m+2\}$ and 0 otherwise.
$a_{2,\sigma_i} = 1/(2m)$ if $i \le 2m$ and 0 otherwise. We make the following
definitions:
\begin{align*}
  \hat w_i &= w_i/(T m^4) \\
  \varepsilon_i &= \sqrt{\hat w_i / 2} \\
  p_{\sigma_{2i-1}} &= i/(m+1) - \varepsilon_i \tag{$1 \le i \le m$} \\
  p_{\sigma_{2i}} &= i/(m+1) + \varepsilon_i \tag{$1 \le i \le m$} \\
  \gamma &= 1/m \sum_{i=1}^{2m} p_{\sigma_i}^2 - 1/m^5 \\
  p_{\sigma_{2m+1}} &= (1 - \sqrt{2\gamma-1})/2 \\
  p_{\sigma_{2m+2}} &= (1 + \sqrt{2\gamma-1})/2
\end{align*}

With this definition, the subset sum instance has a solution if and only if the
integral assignment instance given by $a_{1,\sigma}, a_{2,\sigma}, p_{\sigma_1},
\dots, p_{\sigma_{2m+2}}$ has a solution.

Before we prove this, we need the following lemma.

\begin{lemma}
  For any $z_1, \dots, z_k \in \R$,
  \[
    \sum_{i=1}^k z_i^2 - \frac{1}{k} \p{\sum_{i=1}^m z_i}^2 =
    \frac{1}{k} \sum_{i < j}^k (z_i - z_j)^2
  \]
  \label{lem:square_diff}
\end{lemma}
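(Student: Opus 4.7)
The plan is to prove this purely by expanding the right-hand side and collecting terms; this is a standard algebraic identity (essentially the statement that sample variance can be written either as ``mean of squares minus square of mean'' or as a mean of squared pairwise differences).

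First I would expand
\[
\sum_{i<j} (z_i - z_j)^2 \;=\; \sum_{i<j} z_i^2 \;-\; 2\sum_{i<j} z_i z_j \;+\; \sum_{i<j} z_j^2.
\]
For the two squared-term sums, I would count multiplicities: each fixed index $\ell \in \{1,\dots,k\}$ appears in exactly $k-1$ unordered pairs, so together these two sums contribute $(k-1)\sum_{\ell=1}^k z_\ell^2$. For the cross term I would use the elementary identity $\bigl(\sum_{\ell=1}^k z_\ell\bigr)^2 = \sum_{\ell=1}^k z_\ell^2 + 2\sum_{i<j} z_i z_j$, which gives $2\sum_{i<j} z_i z_j = \bigl(\sum_\ell z_\ell\bigr)^2 - \sum_\ell z_\ell^2$.

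Substituting these into the expanded expression yields
\[
\sum_{i<j}(z_i - z_j)^2 \;=\; (k-1)\sum_{\ell} z_\ell^2 \;-\; \Bigl(\sum_{\ell} z_\ell\Bigr)^2 \;+\; \sum_{\ell} z_\ell^2 \;=\; k \sum_\ell z_\ell^2 \;-\; \Bigl(\sum_\ell z_\ell\Bigr)^2.
\]
Dividing both sides by $k$ gives exactly the claimed identity. There is no real obstacle here; the only care needed is correctly counting that each index appears in $k-1$ pairs and keeping track of the cross-term identity, both of which are routine.
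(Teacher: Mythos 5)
Your proof is correct and is essentially the paper's argument run in reverse: the paper starts from the left-hand side, expands $\p{\sum_i z_i}^2 = \sum_i z_i^2 + 2\sum_{i<j} z_i z_j$, and uses the same counting observation (each index lies in $k-1$ unordered pairs, so $(k-1)\sum_\ell z_\ell^2 = \sum_{i<j}(z_i^2 + z_j^2)$) to complete the square, whereas you expand the right-hand side and apply the identical two facts to recover the left. Both computations are sound and interchangeable, so there is nothing to fix.
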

\begin{proof}
  \begin{align*}
    \sum_{i=1}^k z_i^2 - \frac{1}{k} \p{\sum_{i=1}^m z_i}^2 &= \sum_{i=1}^k
    z_i^2 - \frac{1}{k} \p{\sum_{i=1}^k z_i^2 + 2 \sum_{i < j}^k z_i z_j} \\
    &= \frac{k-1}{k} \sum_{i=1}^k z_i^2 - \frac{2}{k} \sum_{i < j}^k z_i z_j \\
    &= \frac{1}{k} \sum_{i < j}^k (z_i^2 + z_j^2) - \frac{2}{k} \sum_{i < j}^k
    z_i z_j \\
    &= \frac{1}{k} \sum_{i < j}^k z_i^2 - 2 z_i z_j + z_j^2 \\
    &= \frac{1}{k} \sum_{i < j}^k (z_i - z_j)^2
  \end{align*}
\end{proof}

Now, we can prove that the integral assignment problem is NP-hard.

\begin{proof}
  First, we observe that for any nontrivial solution to the integral assignment
  instance, there must be two bins $b \ne b'$ such that $X_{\sigma_{2m+1},b} =
  1$ and $X_{\sigma_{2m+2},b'} = 1$. In other words, the people with
  $\sigma_{2m+1}$ and $\sigma_{2m+2}$ must be split up. If not, then all the
  people of group 1 would be in the same bin, meaning that bin must be labeled
  with the base rate $\rho_1 = 1/2$. In order to maintain fairness, the same
  would have to be done for all the people of group 2, resulting in the trivial
  solution. Moreover, $b$ and $b'$ must be labeled $(1 \pm \sqrt{2\gamma-1})/2$
  respectively because those are the fraction of people of group 1 in those bins
  who belong to the positive class.

  This means that $\gamma_1 = 1/\rho \cdot (a_{1,\sigma_{2m+1}}
  p_{\sigma_{2m+1}}^2 + a_{1,\sigma_{2m+2}} p_{\sigma_{2m+2}}^2) =
  p_{\sigma_{2m+1}}^2 + p_{\sigma_{2m+2}}^2 = \gamma$ as defined above. We know
  that a well-calibrated assignment is fair only if $\gamma_1 =
  \gamma_2$, so we know $\gamma_2 = \gamma$.

  Next, we observe that $\rho_2 = \rho_1 = 1/2$ because all of the positive
  $a_{2,\sigma}$'s are $1/(2m)$, so $\rho_2$ is just the average of
  $\{p_{\sigma_1}, \dots, p_{\sigma_{2m}}\}$, which is $1/2$ by symmetry.

  Let $Q$ be the partition of $[2m]$ corresponding to the assignment, meaning
  that for a given $q \in Q$, there is a bin $b_q$ containing all people with
  $\sigma_i$ such that $i \in q$. The label on that bin is
  \begin{align*}
    v_q &= \frac{\sum_{i \in q} a_{2,\sigma_i} p_{\sigma_i}}{\sum_{i \in q}
    a_{2,\sigma_i}} \\
    &= \frac{1/(2m) \sum_{i \in q} p_{\sigma_i}}{|q|/(2m)} \\
    &= \frac{1}{|q|} \sum_{i \in q} p_{\sigma_i}
  \end{align*}
  Furthermore, bin $b_q$ contains $\sum_{i \in q} a_{2,\sigma_i} p_{\sigma_i} =
  1/(2m) \sum_{i \in q} p_{\sigma_i}$ positive fraction. Using this, we can come up
  with an expression for $\gamma_2$.
  \begin{align*}
    \gamma_2 &= \frac{1}{\rho} \sum_{q \in Q} \p{v_b \cdot \frac{1}{2m} \sum_{i
    \in q} p_{\sigma_i}} \\
    &= \frac{1}{m} \sum_{q \in Q} \frac{1}{|q|} \p{\sum_{i \in q}
    p_{\sigma_i}}^2
  \end{align*}
  Setting this equal to $\gamma$, we have
  \begin{align*}
    \frac{1}{m} \sum_{q \in Q} \frac{1}{|q|} \p{\sum_{i \in q} p_{\sigma_i}}^2
    &= \frac{1}{m} \sum_{i=1}^{2m} p_{\sigma_i}^2 - \frac{1}{m^5} \\
    \sum_{q \in Q} \frac{1}{|q|} \p{\sum_{i \in q} p_{\sigma_i}}^2
    &= \sum_{i=1}^{2m} p_{\sigma_i}^2 - \frac{1}{m^4}
  \end{align*}
  Subtracting both sides from $\sum_{i=1}^{2m} p_{\sigma_i}^2$ and using
  Lemma~\ref{lem:square_diff}, we have
  \begin{equation}
    \sum_{q \in Q} \frac{1}{|q|} \sum_{i < j \in q} (p_{\sigma_i} -
    p_{\sigma_j})^2 = \frac{1}{m^4}
    \label{eq:reduction}
  \end{equation}
  Thus, $Q$ is a fair nontrivial assignment if and only if \eqref{eq:reduction}
  holds.

  Next, we show that there exists $Q$ that satisfies \eqref{eq:reduction} if and
  only if there there exists some $S \subseteq [m]$ such that $\sum_{i \in S}
  \hat w_i = 1/m^4$.

  Assume $Q$ satisfies \eqref{eq:reduction}. Then, we first observe that any $q
  \in Q$ must either contain a single $i$, meaning it does not contribute to the
  left hand side of \eqref{eq:reduction}, or $q = \{2i-1, 2i\}$ for some $i$. To
  show this, observe that the closest two elements of $\{p_{\sigma_1}, \dots,
  p_{\sigma_{2m}}\}$ not of the form $\{p_{\sigma_{2i-1}}, p_{\sigma_{2i}}\}$ must
  be some $\{p_{\sigma_{2i}}, p_{\sigma_{2i+1}}\}$. However, we find that
  \begin{align*}
    (p_{\sigma_{2i+1}} - p_{\sigma_{2i}})^2 &= \p{\frac{i+1}{m+1} -
    \varepsilon_{i+1} - \p{\frac{i}{m+1} + \varepsilon_i}}^2 \\
    &= \p{\frac{1}{m+1} - \varepsilon_{i+1} - \varepsilon_i}^2 \\
    &= \p{\frac{1}{m+1} - \sqrt{\frac{\hat w_{i+1}}{2}} - \sqrt{\frac{\hat
    w_i}{2}}}^2 \\
    &\ge \p{\frac{1}{m+1} - \sqrt{\frac{2}{m^4}} \tag{$\hat w_i \le 1/m^4$}}^2
    \\
    &= \p{\frac{1}{m+1} - \frac{\sqrt{2}}{m^2}}^2 \\
    &\ge \p{\frac{1}{2m} - \frac{\sqrt{2}}{m^2}}^2 \\
    &= \p{\frac{m - 2\sqrt{2}}{2m^2}}^2 \\
    &\ge \p{\frac{m}{4m^2}}^2 \\
    &= \p{\frac{1}{4m}}^2 \\
    &= \frac{1}{16 m^2}
  \end{align*}
  If any $q$ contains any $j,k$ not of the form $2i-1, 2i$, then
  \eqref{eq:reduction} will have a term on the left hand side at least $1/m
  \cdot 1/(16m^2) = 1/(16m^3) > 1/m^4$ for large enough $m$, and since there can
  be no negative terms on the left hand side, this immediately makes it
  impossible for $Q$ to satisfy \eqref{eq:reduction}.

  Consider every $2i-1, 2i \in [2m]$. Let $q_i = \{2i-1, 2i\}$. As shown above,
  either $q_i \in Q$ or $\{2i-1\} \in Q$ and $\{2i\} \in Q$. In the latter case,
  neither $p_{\sigma_{2i-1}}$ nor $p_{\sigma_{2i}}$ contributes to
  \eqref{eq:reduction}. If $q_i \in Q$, then $q_i$ contributes
  $1/2(p_{\sigma_{2i} - p_{\sigma_{2i-1}}})^2 = 1/2 (2\varepsilon_i)^2 = \hat
  w_i$ to the overall sum on the left hand side. Therefore, we can write the
  left hand side of \eqref{eq:reduction} as
  \begin{equation*}
    \sum_{q \in Q} \frac{1}{|q|} \sum_{i < j \in q} (p_{\sigma_i} -
    p_{\sigma_j})^2 = \sum_{q_i \in Q} \frac{1}{2} (p_{\sigma_{2i} -
    p_{\sigma_{2i-1}}})^2 =  \sum_{q_i \in Q} \hat w_i = \frac{1}{m^4}
  \end{equation*}
  Then, we can build a solution to the original subset sum instance as $S = \{i
  : q_i \in Q\}$, giving us $\sum_{i \in S} \hat w_i = \frac{1}{m^4}$.
  Multiplying both sides by $T m^4$, we get $\sum_{i \in S} w_i = T$, meaning
  $S$ is a solution for the subset sum instance.

  To prove the other direction, assume we have a solution $S \subseteq [m]$ such
  that $\sum_{i \in S} w_i = T$. Dividing both sides by $T m^4$, we get
  $\sum_{i \in S} \hat w_i = 1/m^4$. We build a partition $Q$ of $2m$ by
  starting with the empty set and adding $q_i = \{2i-1, 2i\}$ to $Q$ if $i \in
  S$ and $\{2i-1\}$ and $\{2i\}$ to $Q$ otherwise. Clearly, each element of
  $[2m]$ appears in $Q$ at most once, making this a valid partition. Moreover,
  when checking to see if \eqref{eq:reduction} is satisfied (which is true if
  and only if $Q$ is a fair assignment), we can ignore all $q \in Q$ such that
  $|q| = 1$ because they don't contribute to the left hand side. Since, we again
  have
  \begin{equation*}
    \sum_{q \in Q} \frac{1}{|q|} \sum_{i < j \in q} (p_{\sigma_i} -
    p_{\sigma_j})^2 = \sum_{q_i \in Q} \frac{1}{2} (p_{\sigma_{2i} -
    p_{\sigma_{2i-1}}})^2 =  \sum_{q_i \in Q} \hat w_i = \frac{1}{m^4}
  \end{equation*}
  meaning $Q$ is a fair assignment. This completes the reduction.
\end{proof}

We have shown that the integral assignment problem is NP-hard, and it is clearly
in NP because given an integral assignment, we can verify in polynomial time
whether such an assignment satisfies the conditions (A), (B), and (C). Thus, the
integral assignment problem is NP-complete.

\end{document}